\newtheorem{definition}{Definition}
\newtheorem{theorem}[definition]{Theorem}
\newtheorem{corollary}[definition]{Corollary}
\newtheorem{proposition}[definition]{Proposition}
\newtheorem{lemma}[definition]{Lemma}
\newtheorem{remark}[definition]{Remark}
\def\N{{\mathbb N}}
\def\R{{\mathbb R}}
\DeclareMathOperator*{\argmin}{argmin}
\DeclareMathOperator{\dtw}{\delta}
\newcommand*{\tran}{^{\mkern-1.5mu\mathsf{T}}}
\newcommand{\commentout}[1]{}
\newcommand{\abs}[1]{\mathop{\left\lvert #1 \right\rvert}} 
\newcommand{\args}[1]{\mathop{\left( #1 \right)}} 
\newcommand{\norm}[1]{\mathop{\left\lVert #1 \right\rVert}}
\newcommand{\cbrace}[1]{\mathop{\left\{ #1 \right\}}}
\newcommand{\argsS}[2]{\mathop{\left( #1 \right)#2}} 
\newcommand{\normS}[2]{\mathop{\left\lVert #1 \right\rVert#2}}
\newcommand{\T}{\mathop{\mathsf{T}}}           	
\renewcommand{\S}[1]{{\mathcal{#1}}}           	
\renewenvironment{cases}{%
\left\{\begin{array}{c@{\quad : \quad}l}}%
{%
\end{array}\right.}
\begin{document}

\title{Optimal Warping Paths are unique for almost every Pair of Time Series}

\author{Brijnesh J.~Jain and David Schultz\\
       Technische Universit\"at Berlin, Germany\\
       e-mail: brijnesh.jain@gmail.com}
\date{}
\maketitle

\begin{abstract} 
Update rules for learning in dynamic time warping spaces are based on optimal warping paths between parameter and input time series. In general, optimal warping paths are not unique resulting in adverse effects in theory and practice. Under the assumption of squared error local costs, we show that no two warping paths have identical costs almost everywhere in a measure-theoretic sense. Two direct consequences of this result are: (i) optimal warping paths are unique almost everywhere, and (ii) the set of all pairs of time series with multiple equal-cost warping paths coincides with the union of exponentially many zero sets of quadratic forms. One implication of the proposed results is that typical distance-based cost functions such as the k-means objective are differentiable almost everywhere and can be minimized by subgradient methods. 
\end{abstract}


\section{Introduction}

\subsection{Dynamic time warping}

Time series such as audio, video, and other sensory signals represent a collection of time-dependent values that may vary in speed (see Fig.~\ref{fig:euclid}). Since the Euclidean distance is sensitive to such variations, its application to time series related data mining tasks may give unsatisfactory results \cite{Esling2012,Fu2011,Xing2010}. Consequently, the preferred approaches to compare time series apply elastic transformations that filter out the variations in speed. Among various techniques, one of the most common elastic transformation is dynamic time warping (DTW) \cite{Sakoe1978}.

\begin{figure}[t]
\centering
 \includegraphics[width=0.47\textwidth]{./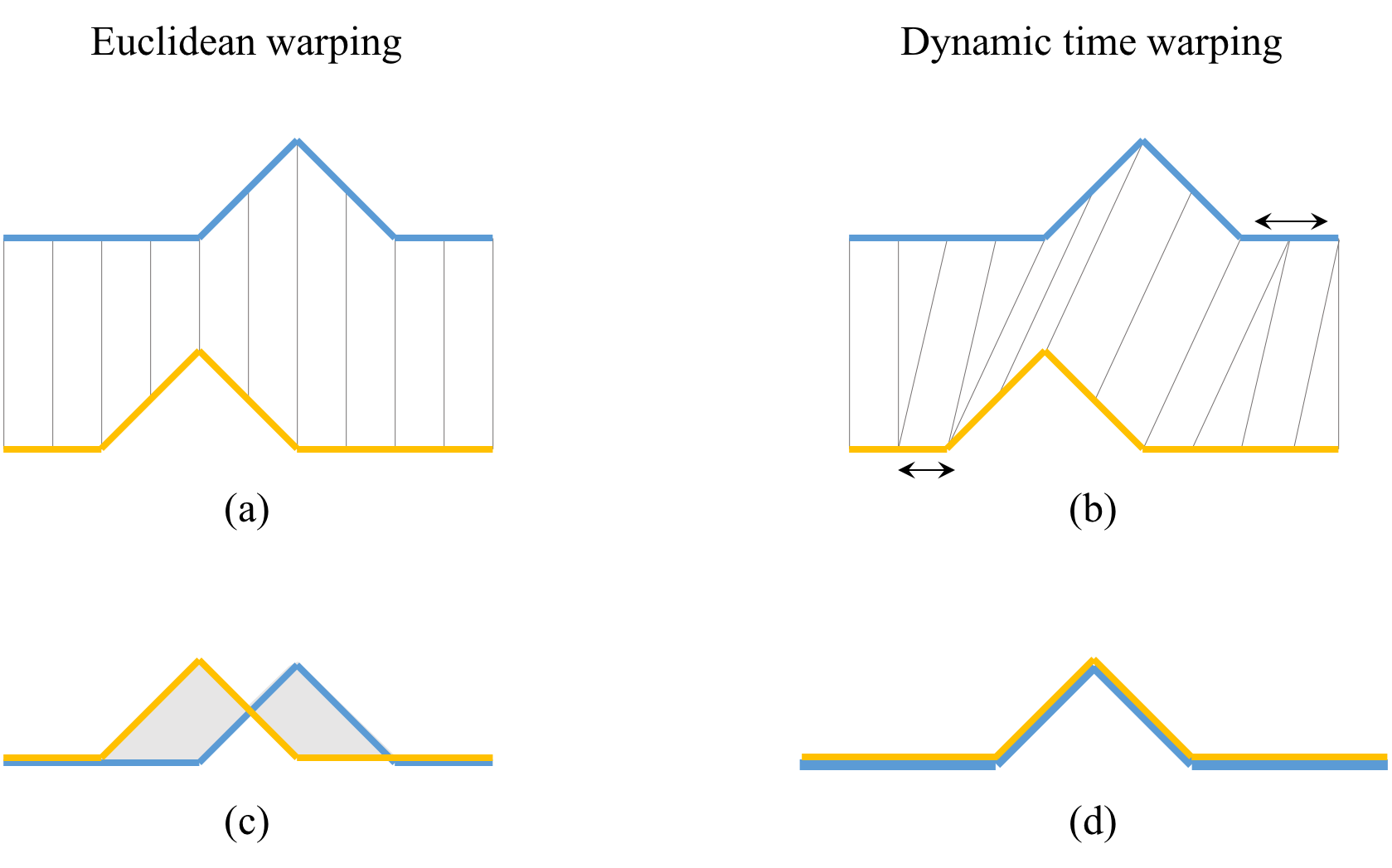}
\caption{Variations in speed and their implications for comparing two time series shown in blue and yellow. Both time series are similar in shape but differ in speed along their flat regions. Plot (a): The Euclidean distance warps the $i$-th points of both time series onto one another as shown by the gray lines. Since the Euclidean warping is sensitive to variations in speed the shapes are not preserved and both peaks are aligned to points in flat regions. Plot (c): Euclidean warping leaves the time series unchanged and results in a large dissimilarity score as indicated by the gray shaded area. Plot (b): An optimal warping that preserves the shape of both time series. Double arrows indicate segments stretched by dynamic time warping. Plot (d): Warped time series obtained by the optimal warping in (b). The left flat segment of the yellow and the right flat segment of the blue time series have been stretched to align both time series. The resulting dissimilarity is zero and better reflects the similarity in shape than the Euclidean distance. }
\label{fig:euclid} 
\end{figure}

Dynamic time warping is based on the concept of warping path. A warping path $p$ determines how to stretch two given time series $x$ and $y$ to warped time series $x'$ and $y'$ under certain constraints. The cost of warping $x$ and $y$ along warping path $p$ measures how dissimilar the warped time series $x'$ and $y'$ are. There are exponential many different warping paths \cite{Banderier2005} each of which determines the cost of warping time series $x$ and $y$. An optimal warping path of $x$ and $y$ is a warping path with minimum cost. Optimal warping paths exist but are not unique in general (see Fig.~\ref{fig:non-unique}).


\subsection{The problem of non-uniqueness}

Recent research is directed towards extending standard statistical concepts and machine learning methods to time series spaces endowed with the DTW distance. Examples include time series averaging \cite{Brill2018,Cuturi2017,Kruskal1983,Petitjean2011,Schultz2017}, k-means clustering \cite{Hautamaki2008,Petitjean2016,Soheily-Khah2015}, self-organizing maps \cite{Kohonen1998}, learning vector quantization \cite{Somervuo1999,Jain2017}, and warped-linear classifiers \cite{Jain2015,Jain2017b}. 

The lowest common denominator of these approaches is that they repeatedly update one or more parameter time series. In addition, update directions are based on optimal warping paths such that the following properties hold:
\begin{itemize}
\itemsep0em
\item
If an optimal warping path is unique, then the update direction is well-defined.
\item 
If an optimal warping path is non-unique, then there are several update directions. 
\end{itemize}
Non-uniqueness of optimal warping paths complicates the algorithmic design of learning methods in DTW spaces and their theoretical analysis. In some situations, non-uniqueness may result in adverse effects. For example, repulsive updating in learning vector quantization finds a theoretical justification only in cases where the corresponding optimal warping path is unique \cite{Jain2017}.

Given the problems caused by non-uniqueness, it is desirable that optimal warping paths are unique almost everywhere. In this case, non-unique optimal warping paths occur exceptionally and are easier to handle as we will shortly. Therefore, we are interested in how prevalent unique optimal warping paths are.

\subsection{Almost everywhere}

The colloquial term ``almost everywhere'' has a precise measure-theoretic meaning. A measure quantifies the size of a set. It generalizes the concepts of length, area, and volume of a solid body defined in one, two, and three dimensions, respectively. The term ``almost everywhere'' finds its roots in the notion of a ``negligible set''. Negligible sets are sets contained in a set of measure zero. For example, the function 
\[
f(x) = \begin{cases}
1 & x \neq 0\\
0 & x = 0
\end{cases}
\]
is discontinuous on the negligible set $\cbrace{0}$ with measure zero. We say, function $f$ is continuous almost everywhere, because the set where $f$ is not continuous is negligible. More generally, a property $P$ is said to be true almost everywhere if the set where $P$ is false is negligible. The property that an optimal warping is unique almost everywhere means that the set of all pairs of time series with non-unique optimal warping path is negligible. 

When working in a measure space, a negligible set contains the exceptional cases we can handle or even do not care about and often ignore. For example, we do not care about the behavior of the above function $f$ on its negligible set $\cbrace{0}$ when computing its Lebesgue integral over $[-1, 1]$. Another example is that cost functions of some machine learning methods in Euclidean spaces such as k-means or learning vector quantization are non-differentiable on a negligible set. In such cases, it is common practice to ignore such points or to resort to subgradient methods.

\subsection{Contributions}

Consider the following property $P$ on the set $\S{F}^m \times \S{F}^n$ of pairs of time series of length $m$ and $n$: 
The pair of time series $x \in \S{F}^m$ and $y \in \S{F}^n$ satisfies $P$ if there are two different (not necessarily optimal) warping paths between $x$ and $y$ with identical costs. Under the assumption of a squared error local cost function, the main result of this article is Theorem \ref{theorem:ae}: 
\begin{quote}\em
Property $P$ is negligible on $\S{F}^m \times \S{F}^n$. 
\end{quote}
Direct consequences of Theorem \ref{theorem:ae} are (i) optimal warping paths are unique almost everywhere, and (ii) property $P$ holds on the union of exponentially many zero sets of quadratic forms. The results hold for uni- as well as multivariate time series. 

An implication of non-unique optimal warping paths is that adverse effects in learning are exceptional cases that can be safely handled. For example, learning amounts in (stochastic) gradient descent update rules almost everywhere.

\begin{figure}[t]
\centering
 \includegraphics[width=0.8\textwidth]{./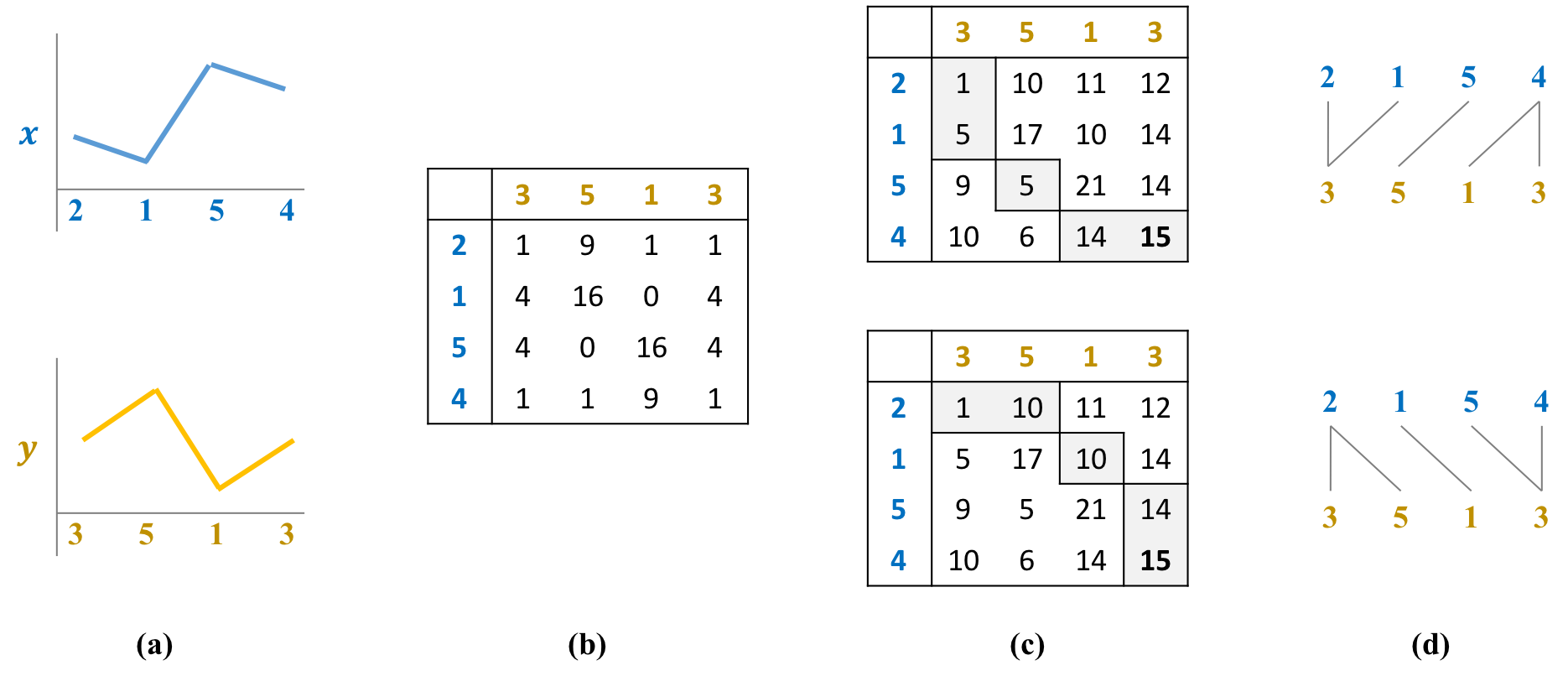}
\caption{Example of non-unique warping paths (see Section \ref{sec:warping-paths} for definitions). Section (a) Time series $x$ and $y$. (b) Local cost matrix with elements $(x_i-y_j)^2$. (c) Accumulated cost obtained by dynamic programming \cite{Sakoe1978} gives the DTW distance $\dtw(x, y) = \sqrt{15}$. The gray shaded cells show two different optimal warping paths. (d) Alignments of $x$ and $y$ by the two different optimal warping paths. }
\label{fig:non-unique} 
\end{figure}

\section{Background}

This section first introduces warping paths and then defines the notions of \emph{negligible} and \emph{almost everywhere} from measure theory.

\subsection{Time Series and Warping Paths}\label{sec:warping-paths}

We first define time series. Let $\S{F} = \R^d$ denote the $d$-dimensional Euclidean space. A $d$-variate \emph{time series} of length $m$ is a sequence $x = (x_1, \ldots, x_m)$ consisting of elements $x_i \in \S{F}$. By $\S{F}^m$ we denote the set of all time series of length $m$ with elements from $\S{F}$. 

\medskip

Next, we describe warping paths. Let $[n] = \cbrace{1, \ldots, n}$, where $n \in \N$. An ($m \times n$)-lattice is a set of the form $\S{L}_{m,n} = [m] \times [n]$. A \emph{warping path} in lattice $\S{L}_{m,n}$ is a sequence $p = (p_1 , \dots, p_L)$ of $L$ points $p_l = (i_l,j_l) \in \S{L}_{m,n}$ such that
\begin{enumerate}
\item $p_1 = (1,1)$ and $p_L = (m,n)$ 
\item $p_{l+1} - p_{l} \in \cbrace{(1,0), (0,1), (1,1)}$ for all $l \in [L-1]$.
\end{enumerate}
The first condition is called boundary condition and the second one is the step condition. 

By $\S{P}_{m,n}$ we denote the set of all warping paths in $\S{L}_{m,n}$. A warping path departs at the upper left corner $(1,1)$ and ends at the lower right corner $(m, n)$ of the lattice. Only east $(0, 1)$, south $(1, 0)$, and southeast $(1, 1)$ steps are allowed to move from a given point $p_l$ to the next point $p_{l+1}$ for all $1 \leq l < L$.

\medskip

Finally, we introduce optimal warping paths. A warping path $p \in \S{P}_{m,n}$ defines an alignment (warping) between time series $x \in \S{F}^m$ and $y \in \S{F}^n$ by relating elements $x_{i}$ and $y_{j}$ if $(i, j) \in p$. The \emph{cost} of aligning time series $x$ and $y$ along warping path $p$ is defined by
\[
C_p(x, y) = \sum_{(i,j) \in p} \normS{x_i-y_j}{^2},
\]
where $\norm{\cdot}$ denotes the Euclidean norm on $\S{F}$. A warping path $p_*\in \S{P}_{m,n}$ between $x$ and $y$ is \emph{optimal} if 
\[
p_* \in \argmin_{p \in \S{P}_{m,n}} C_p(x, y).
\]
By $\S{P}_*(x,y)$ we denote the set of all optimal warping paths between time series $x$ and $y$. The DTW distance is defined by
\[
\delta(x, y) = \min_{\displaystyle p \in \S{P}_{m,n}} \sqrt{C_p(x, y)}.
\]
is the DTW distance. Observe that $\delta(x, y) = \sqrt{C_p(x, y)}$ for all $p \in \S{P}_*(x,y)$.

\subsection{Measure-Theoretic Concepts}

We introduce the necessary measure-theoretic concepts to define the notions of \emph{negligible} and \emph{almost everywhere}. For details, we refer to \cite{Halmos2013}. 

\medskip

One issue in measure theory is that not every subset of a given set $\S{X}$ is measurable. A family $\S{A}$ of measurable subsets of a set $\S{X}$ is called $\sigma$-algebra in $\S{X}$. A measure is a function $\mu: \S{A} \rightarrow \R_+$ that assigns a non-negative value to every measurable subset of $\S{X}$ such that certain conditions are satisfied. To introduce these concepts formally, we assume that $\mathfrak{P}(\S{X})$ denotes the power set of a set $\S{X}$, that is the set of all subsets of $\S{X}$. A system $\S{A} \subset \mathfrak{P}(\S{X})$ is called a 
$\sigma$\emph{-algebra} in $\S{X}$ if it has the following properties:
\begin{enumerate}
\item $\S{X} \in \S{A}$
\item $\S{U} \in \S{A}$ implies $\S{X} \setminus \S{U} \in \S{A}$
\item $\args{\S{U}_i}_{i \in \N} \in \S{A}$ implies $\bigcup_{i \in \N} \S{U}_i \in \S{A}$.
\end{enumerate}
A \emph{measure} on $\S{A}$ is a function $\mu: \S{A} \rightarrow [0, +\infty]$ that satisfies the following properties:
\begin{enumerate}
\item $\mu(\S{U}) \geq 0$ for all $\S{U} \in \S{A}$
\item $\mu(\emptyset) = 0$
\item For a countable collection of disjoint sets $\args{\S{U}_i}_{i \in \N} \in \S{A}$, we have
\[
\mu\args{\bigcup_{i \in \N} \S{U}_i} = \sum_{i \in \N} \mu\args{\S{U}_i}.
\]
\end{enumerate}
A triple $(\S{X}, \S{A}, \mu)$ consisting of a set $\S{X}$, a $\sigma$-algebra $\S{A}$ in $\S{X}$ and a measure $\mu$ on $\S{A}$ is called a \emph{measure space}. The Borel-algebra $\S{B}$ in $\R^d$ is the $\sigma$-algebra generated by the open sets of $\R^d$. The Lebesgue-measure $\mu$ on $\S{B}$ generalizes the concept of $d$-volume of a box in $\R^d$. The triple $\args{\R^d, \S{B}, \mu}$ is called \emph{Borel-Lebesgue measure space}. 

Let $\args{\S{X}, \S{A}, \mu}$ be a measure space, where $\S{X}$ is a set, $\S{A}$ is a $\sigma$-algebra in $\S{X}$, and $\mu$ is a measure defined on $\S{A}$. A set $\S{N} \subset \S{X}$ is $\mu$-\emph{negligible} if there is a set $\S{N}' \in \S{A}$ such that $\mu(\S{N}') = 0$ and $\S{N} \subseteq \S{N}'$. A property of $\S{X}$ is said to hold $\mu$-\emph{almost everywhere} if the set of points in $\S{X}$ where this property fails is $\mu$-negligible.

\section{Results}

We first show that optimal warping paths are unique almost everywhere. Then we geometrically describe the location of the non-unique set. Finally, we discuss the implications of the proposed results on learning in DTW spaces.

\medskip

Let $\S{X} = \S{F}^m \times \S{F}^n$ bet the set of all pairs $(x, y)$ of time series, where $x$ has length $m \in \N$ and $y$ has length $n\in \N$. We regard the set $\S{X}$ as a Euclidean space and assume the Lebesgue-Borel measure space $\args{\S{X}, \S{B}, \mu}$.\footnote{See Remark \ref{remark:justification} for an explanation of why we regard $\S{X}$ as a Euclidean space.} 
The \emph{multi optimal-path set} of $\S{X}$ is defined by
\[
\S{N}_{\S{X}}^* = \cbrace{(x, y) \in \S{X} \,:\, \abs{\S{P}_*(x,y)} > 1}.
\]
This set consists of all pairs $(x, y) \in \S{X}$ with non-unique optimal warping path. To assert that $\S{N}_{\S{X}}^*$ is $\mu$-negligible, we show that $\S{N}_{\S{X}}^*$ is a subset of a set of measure zero. For this, consider the \emph{multi-path set} 
\[
\S{N_X} = \bigcup_{\substack{p,q \in \S{P}_{m,n}\\ p \neq q}} \cbrace{(x, y) \in \S{X} \,:\, C_p(x, y) = C_q(x, y) },
\]
The set $\S{N_X}$ consists of all pairs $(x, y)$ that can be aligned along different warping paths with identical cost. Obviously, the set $\S{N}_{\S{X}}^*$ is a subset of $\S{N_X}$. The next theorem states that $\S{N_X}$ is a set of measure zero.
\begin{theorem}\label{theorem:ae}
Let $\args{\S{X}, \S{B}, \mu}$ be the Lebesgue-Borel measure space. Then $\mu(\S{N_X}) = 0$.
\end{theorem}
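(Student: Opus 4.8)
The plan is to reduce the statement to two standard facts: that the zero set of a nontrivial real polynomial is Lebesgue-null, and that a warping path is recovered from the set of lattice points it visits.

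First I would note that $\S{P}_{m,n}$ is finite (warping paths have length at most $m+n-1$ and admit only finitely many admissible step sequences), so $\S{N_X}$ is a \emph{finite} union of the sets $\S{N}_{p,q} = \cbrace{(x,y) \in \S{X} \,:\, C_p(x,y) = C_q(x,y)}$ taken over the pairs $p \neq q$. Since a finite (indeed countable) union of null sets is null, it suffices to prove $\mu(\S{N}_{p,q}) = 0$ for each fixed pair $p \neq q$. To that end, write $x_i = (x_{i,1},\dots,x_{i,d})$ and $y_j = (y_{j,1},\dots,y_{j,d})$ and identify $\S{X}$ with $\R^N$, $N = d(m+n)$, carrying Lebesgue measure. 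Then
\[
C_p(x,y) = \sum_{(i,j)\in p}\ \sum_{k=1}^d (x_{i,k}-y_{j,k})^2
\]
is a polynomial of degree $2$ in the $N$ coordinates, hence so is $f_{p,q} := C_p - C_q$, and $\S{N}_{p,q}$ is precisely its zero set.

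At this point I would invoke the lemma that the zero set of a polynomial $\R^N \to \R$ which is not identically zero has Lebesgue measure zero. This is classical and can be proved by induction on $N$ via Fubini (expand $f$ in the last variable; off the null set where all coefficient polynomials vanish, $f$ restricts to a nonzero univariate polynomial with finitely many roots), or simply cited. So the crux is to show that $f_{p,q} \not\equiv 0$ whenever $p \neq q$. Expanding $(x_{i,k}-y_{j,k})^2 = x_{i,k}^2 - 2\,x_{i,k}y_{j,k} + y_{j,k}^2$, the coefficient of the monomial $x_{i,k}\,y_{j,k}$ in $C_p$ is $-2$ if $(i,j)\in p$ and $0$ otherwise (each lattice point occurs at most once along a monotone path). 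If $f_{p,q}$ were the zero polynomial, comparing these coefficients would force $(i,j)\in p \iff (i,j)\in q$ for every $(i,j)$, i.e.\ $p$ and $q$ visit the same set of lattice points. But along any warping path the quantity $i+j$ strictly increases (each step raises it by $1$ or $2$), so the visited points are totally ordered by $i+j$ and the sequence is uniquely determined by its point set; hence $p=q$, a contradiction. Therefore $f_{p,q}$ is a nonzero polynomial, $\mu(\S{N}_{p,q})=0$, and summing over the finitely many pairs gives $\mu(\S{N_X})=0$.

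I expect the only genuinely delicate point to be this non-vanishing argument — in particular the observation that distinct warping paths already differ as point sets, so that the bilinear ``cross terms'' of $C_p$ and $C_q$ cannot fully cancel — together with, if one wishes to keep the paper self-contained, a careful statement of the polynomial-null-set lemma. Everything else (finiteness of $\S{P}_{m,n}$, countable subadditivity of $\mu$, the polynomial structure of $C_p$) is routine bookkeeping.
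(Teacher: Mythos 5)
Your proof is correct, and its skeleton is the same as the paper's: write $\S{N_X}$ as a finite union over pairs $p \neq q$ of the zero sets of $C_p - C_q$, show each difference is a non-trivial quadratic polynomial, and conclude via the classical fact that the zero set of a polynomial that is not identically zero is Lebesgue-null. The execution differs in two respects worth recording. First, the paper packages $C_p - C_q$ as a quadratic form $z\tran A^{(pq)} z$ with $A^{(pq)} = \Theta_p\tran\Theta_p - \Theta_q\tran\Theta_q$ and proves a separate lemma (Lemma \ref{lemma:xAx-is-zero}) that the zero set of a non-zero symmetric quadratic form is null, by orthogonally diagonalizing $A^{(pq)}$, applying the polynomial zero-set theorem to the diagonalized form, and transporting the result back through a change of variables; you apply the polynomial zero-set lemma to $C_p - C_q$ directly, which short-circuits that detour, since a quadratic form with non-zero symmetric matrix is already a non-zero polynomial in the coordinates. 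Second, and more substantively, the paper's proof simply asserts that $p \neq q$ forces the warping matrices $W_p \neq W_q$, hence $A^{(pq)} \neq 0$, whereas you actually prove the corresponding non-vanishing: the coefficient of $x_{i,k}y_{j,k}$ in $C_p$ is $-2$ exactly when $(i,j) \in p$, and a warping path is recovered from its point set because $i+j$ strictly increases along the path. Since the $(a,b)$-entry of $W_p = \Phi\tran\Psi$ is precisely the indicator of $(a,b) \in p$, your argument supplies the justification the paper leaves implicit. Finally, you treat arbitrary dimension $d$ in one pass, where the paper proves the case $d=1$ and only sketches the multivariate reduction; nothing is lost, as the disjointness of the $x$- and $y$-coordinates is all the coefficient comparison needs.
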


From Theorem \ref{theorem:ae} and $\S{N}_{\S{X}}^* \subseteq \S{N_X}$ immediately follows that $\S{N}_{\S{X}}^*$ is $\mu$-negligible.

\begin{corollary}
Under the assumptions of Theorem \ref{theorem:ae} the set $\S{N}_{\S{X}}^*$ is $\mu$-negligible.
\end{corollary}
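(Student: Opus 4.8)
The plan is to apply the definition of $\mu$-negligibility directly, using $\S{N_X}$ itself as the required measure-zero superset. Recall that a set $\S{N} \subseteq \S{X}$ is $\mu$-negligible precisely when there exists a measurable set $\S{N}' \in \S{B}$ with $\mu(\S{N}') = 0$ and $\S{N} \subseteq \S{N}'$. Hence it suffices to exhibit such an $\S{N}'$ for $\S{N}_{\S{X}}^*$, and the natural candidate is $\S{N}' = \S{N_X}$.

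Three facts then close the argument. First, the inclusion $\S{N}_{\S{X}}^* \subseteq \S{N_X}$ holds: if $(x,y)$ admits more than one optimal warping path, then any two distinct optimal paths $p \neq q$ satisfy $C_p(x,y) = \delta(x,y)^2 = C_q(x,y)$, so $(x,y)$ lies in the corresponding cost-equality set and therefore in the union $\S{N_X}$. Second, $\S{N_X}$ is Borel measurable: for each pair $p \neq q$ the map $(x,y) \mapsto C_p(x,y) - C_q(x,y)$ is a polynomial, hence continuous, so its zero set is closed; as $\S{P}_{m,n}$ is finite, $\S{N_X}$ is a finite union of closed sets and thus lies in $\S{B}$. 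Third, Theorem \ref{theorem:ae} gives $\mu(\S{N_X}) = 0$. Combining these, $\S{N}' = \S{N_X} \in \S{B}$ satisfies $\mu(\S{N}') = 0$ and $\S{N}_{\S{X}}^* \subseteq \S{N}'$, which is exactly the definition of $\mu$-negligibility for $\S{N}_{\S{X}}^*$.

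There is no substantive obstacle here: the content of the result resides entirely in Theorem \ref{theorem:ae}, and the corollary is a one-line consequence of the definitions once the candidate witness set is seen to be admissible. The only point worth stating explicitly is that $\S{N_X}$ genuinely qualifies as the set $\S{N}'$, namely that it is measurable; this I would confirm by the polynomial (indeed quadratic) nature of each cost difference together with the finiteness of the warping-path set, as indicated above. Everything else reduces to monotonicity of the subset relation and the verbatim definition of negligibility.
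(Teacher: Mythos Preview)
Your proposal is correct and follows exactly the paper's own argument: the paper states that the corollary ``immediately follows'' from Theorem~\ref{theorem:ae} together with the inclusion $\S{N}_{\S{X}}^* \subseteq \S{N_X}$. Your additional remark on measurability of $\S{N_X}$ (as a finite union of zero sets of polynomials) is a welcome clarification the paper leaves implicit.
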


Thus, optimal warping paths are unique $\mu$-almost everywhere in $\S{X}$. Even more generally: The property that all warping paths have different cost holds $\mu$-almost everywhere in $\S{X}$.

\medskip

We describe the geometric form of the multi-path set $\S{N_X}$. For this we identify $\S{F}^m \times \S{F}^n$ with $\S{F}^k$, where $k = m+n$. Thus, pairs $(x, y) \in \S{X}$ of time series are summarized to $z \in \S{F}^k$, henceforth denoted as $z \in \S{X}$. By $\S{X}^2 = \S{F}^{k \times k}$ we denote the set of all ($k \times k$)-matrices with elements from $\S{F}$. Finally, 
the zero set of a function $f:\S{X} \rightarrow \R$ is of the form
\[
\S{Z}(f) = \cbrace{z \in \S{X} \,:\, f(z) = 0}.
\]
From the proof of Theorem \ref{theorem:ae} directly follows that the set $\S{N_X}$ is the union of zero sets of quadratic forms. 
\begin{corollary}\label{cor:form}
Under the assumptions of Theorem \ref{theorem:ae}, there is an integer $D \in \N$ and symmetric matrices $A_1, \ldots, A_D \in \S{X}^2$ such that
\[
\S{N_X} = \bigcup_{i=1}^D \S{Z}\args{z\tran A_i \, z}.
\]
\end{corollary}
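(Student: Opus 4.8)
The plan is to observe that, for each fixed warping path $p \in \S{P}_{m,n}$, the cost $C_p$ is a homogeneous quadratic form in the coordinates of $z = (x,y)$, so that every defining set $\cbrace{z \in \S{X} : C_p(z) = C_q(z)}$ in the definition of $\S{N_X}$ is the zero set of the quadratic form $C_p - C_q$; since $\S{P}_{m,n}$ is finite, $\S{N_X}$ is a finite union of such zero sets, which is exactly the claimed form. Recall that every homogeneous degree-two polynomial on a Euclidean space is of the form $z\tran A\, z$ for a unique symmetric matrix $A$, so the only real work is to exhibit $C_p$ as such a polynomial.

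First I would make the quadratic form explicit. Identifying $\S{X} = \S{F}^m \times \S{F}^n$ with $\S{F}^k$, $k = m+n$, write $z = (x_1,\dots,x_m,y_1,\dots,y_n)$ and expand $\normS{x_i - y_j}{^2} = \normS{x_i}{^2} - 2\inner{x_i, y_j} + \normS{y_j}{^2}$ in the definition of $C_p$ to get
\[
C_p(x,y) \;=\; \sum_{i=1}^m a_i^{(p)} \normS{x_i}{^2} \;-\; 2\!\!\sum_{(i,j)\in p}\!\! \inner{x_i, y_j} \;+\; \sum_{j=1}^n b_j^{(p)} \normS{y_j}{^2},
\]
where $a_i^{(p)}$ (resp. $b_j^{(p)}$) is the number of points of $p$ whose first (resp. second) coordinate equals $i$ (resp. $j$). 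Collecting these coefficients into the block matrix
\[
M_p \;=\; \begin{pmatrix} \operatorname{diag}\args{a_1^{(p)},\dots,a_m^{(p)}} & -N_p \\[2pt] -N_p\tran & \operatorname{diag}\args{b_1^{(p)},\dots,b_n^{(p)}} \end{pmatrix},
\]
with $N_p$ the $0$--$1$ incidence matrix of $p$ (so $(N_p)_{ij} = 1$ iff $(i,j)\in p$, and $a_i^{(p)}$, $b_j^{(p)}$ are its row and column sums), one obtains $C_p(x,y) = z\tran M_p\, z$ with $M_p$ symmetric; splitting the cross term $-2\inner{x_i,y_j}$ evenly between the two off-diagonal blocks is what secures the symmetry. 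In the multivariate case the same $k \times k$ coefficient pattern acts on each of the $d$ coordinate axes, which is precisely the meaning of $z\tran A z$ for $A \in \S{X}^2$ in the statement; if desired one replaces any representing matrix by $\tfrac12(A + A\tran)$ at no cost.

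Then, for distinct $p, q \in \S{P}_{m,n}$, set $A_{p,q} = M_p - M_q \in \S{X}^2$, which is again symmetric, and note that $C_p(z) = C_q(z)$ holds iff $z\tran A_{p,q}\, z = 0$, i.e. $\cbrace{(x,y) \in \S{X} : C_p(x,y) = C_q(x,y)} = \S{Z}\args{z\tran A_{p,q}\, z}$. Since $\S{P}_{m,n}$ is finite, there are only finitely many unordered pairs $\cbrace{p,q}$ with $p \neq q$; taking $D$ to be their number and $A_1,\dots,A_D$ an enumeration of the matrices $A_{p,q}$, the union in the definition of $\S{N_X}$ becomes $\S{N_X} = \bigcup_{i=1}^D \S{Z}\args{z\tran A_i\, z}$. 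I expect no genuine obstacle here: the corollary merely records a computation already performed inside the proof of Theorem \ref{theorem:ae}, where $C_p - C_q$ is written as a quadratic form precisely in order to argue that it is not identically zero (the fact that forces each $\S{Z}\args{z\tran A_i\, z}$, and hence $\S{N_X}$, to have Lebesgue measure zero). The only point requiring attention is the bookkeeping of the coefficients $a_i^{(p)}, b_j^{(p)}$ and the incidence pattern, together with making the passage to the multivariate setting explicit.
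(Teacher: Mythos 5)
Your proposal is correct and follows essentially the same route as the paper: the paper's proof of Theorem \ref{theorem:ae} writes $C_p(z) = z\tran \Theta\tran\Theta\, z$ via the embedding matrices and sets $A^{(ij)} = \Theta_i\tran\Theta_i - \Theta_j\tran\Theta_j$, which is exactly your $M_p - M_q$, since your incidence matrix $N_p$ is the warping matrix $W = \Phi\tran\Psi$ and your row/column sums are the diagonals of the valence matrices $V$ and $\overline{V}$. The only difference is cosmetic --- you obtain the symmetric block matrix by direct expansion of the sum rather than by factoring through the aggregated embedding $\Theta = (\Phi, -\Psi)$ --- and your count $D = D_{m,n}(D_{m,n}-1)/2$ over unordered pairs matches the paper's.
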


\medskip

The number $D$ of zero sets in Corollary \ref{cor:form} grows exponentially in $m$ and $n$. From the proof of Theorem \ref{theorem:ae} follows that $D = D_{m,n}(D_{m,n}-1)/2$, where 
\[
D_{m,n} = \sum_{i=0}^{\min\cbrace{m,n}} 2^i \binom{m}{i}\binom{n}{i}.
\]
is the Delannoy number \cite{Banderier2005}. The Delannoy number $D_{m,n} = \abs{\S{P}_{m,n}}$ counts the number of all warping paths in lattice $\S{L}_{m,n}$. Table \ref{tab:Delannoy} presents the first Delannoy numbers up to $m = 10$ and $n = 9$. We see that there are more than half a million warping paths in a ($10\times 9$)-lattice showing that $\S{N}_X$ is the union of more than 178 billion zero sets. For two time series of length $20$, the number of warping paths is $D_{20,20} = 260,543,813,797,441$, which is more than $260$ trillions. Thus, the multi-path set $\S{N_X}$ of two time series of length $20$ is the union of more than $33$ octillion zero sets, that is $d > 33 \cdot 10^{27}$. An open question is the number $D^* \leq D$ of zero sets that form the multi optimal-path set $\S{N}_{\S{X}}^*$. The example in Figure \ref{fig:plot} indicates that the multi optimal-path set $\S{N}_{\S{X}}^*$ can be much smaller than the multi-path set $\S{N_X}$.

\begin{table}[t]
\small
\centering
\begin{tabular}{r|rrrrrrrrr}
$m \setminus n$ & 1 & 2 & 3 & 4 & 5 & 6 & 7 & 8 & 9\\
\hline
&&&&&&&&&\\[-2ex]
1 & 1 & 1 & 1 & 1 & 1 & 1 & 1 & 1 & 1 \\
2 & 1 & 3 & 5 & 7 & 9 & 11 & 13 & 15 & 17 \\
3 & 1 & 5 & 13 & 25 & 41 & 61 & 85 & 113 & 145 \\
4 & 1 & 7 & 25 & 63 & 129 & 231 & 377 & 575 & 833 \\
5 & 1 & 9 & 41 & 129 & 321 & 681 & 1,289 & 2,241 & 3,649 \\
6 & 1 & 11 & 61 & 231 & 681 & 1,683 & 3,653 & 7,183 & 13,073 \\
7 & 1 & 13 & 85 & 377 & 1,289 & 3,653 & 8,989 & 19,825 & 40,081 \\
8 & 1 & 15 & 113 & 575 & 2,241 & 7,183 & 19,825 & 48,639 & 108,545 \\
9 & 1 & 17 & 145 & 833 & 3,649 & 13,073 & 40,081 & 108,545 & 265,729 \\
10 & 1 & 19 & 181 & 1,159 & 5,641 & 22,363 & 75,517 & 224,143 & 598,417 \\
\end{tabular}
\caption{Delannoy numbers $D_{m,n}$ for $m \in [10]$ (rows) and $n \in [9]$ (columns).}
\label{tab:Delannoy}
\end{table}

\begin{figure}[t]
\centering
\includegraphics[width=0.5\textwidth]{./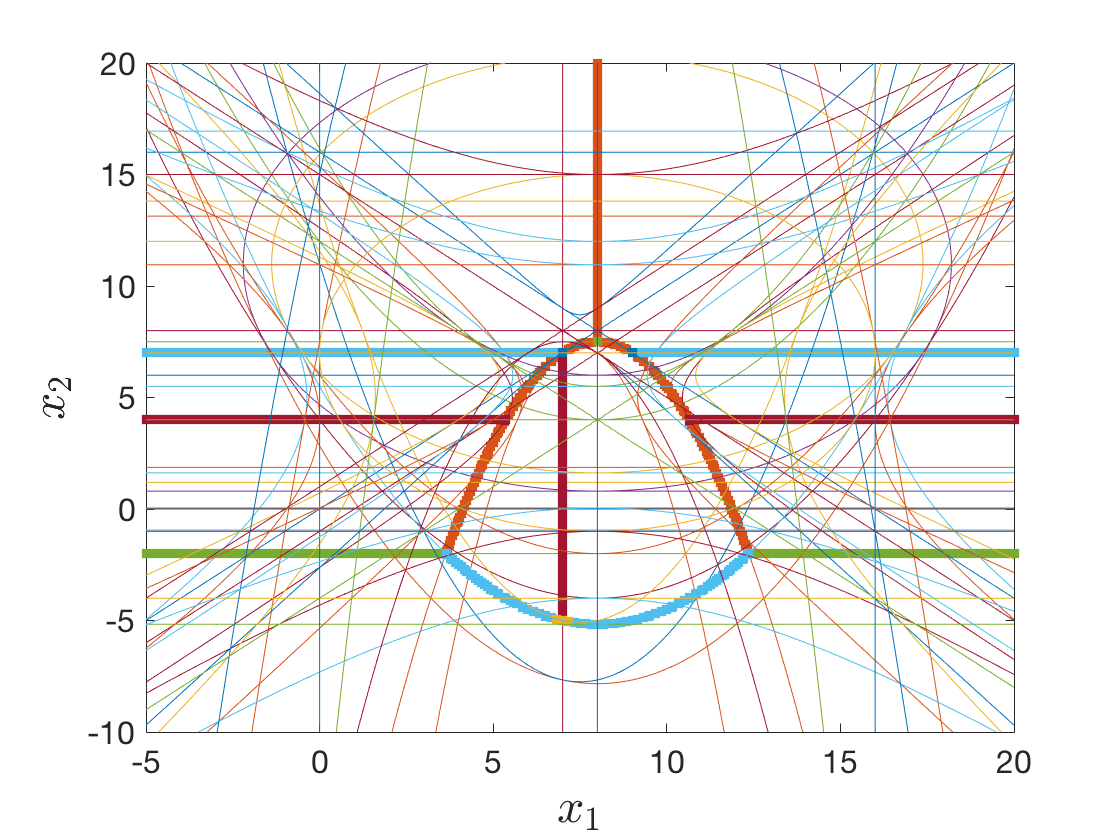}
\caption{Illustration of the multi-path set $\S{N_X}$ and multi optimal-path sets $\S{N}_{\S{X}}^*$ for the pair of time series $x = (x_1, x_2, 0)$ and $y = (4,8,7)$ with varying values for $x_1$ and $x_2$. The Delannoy number $D_{3,3}$ is $13$. Hence, the multi-path set $\S{N_X}$ consists of $D = 78$ zero sets of quadratic forms as indicated by colored curves. The subset of multi optimal-path set $\S{N}_{\S{X}}^*$ is highlighted by fat curve segments.}
\label{fig:plot} 
\end{figure}

\subsection{Discussion}

We discuss the implications of Theorem \ref{theorem:ae} for learning in DTW spaces. 

\subsubsection{Learning}

This section shows that almost-everywhere uniqueness implies almost-everywhere differentiability of the underlying cost function.

\medskip

To convey the line of argument, it is sufficient to restrict to the problem of averaging time series as representative for other, more complex learning problems. In contrast to computing the average in Euclidean spaces, time series averaging is a non-trivial task for which the complexity class is currently unknown \cite{Brill2018}.

Let $x^{(1)}, \ldots, x^{(N)}$ be a sample of $N$ time series, possibly of varying length. Consider the cost function $J: \S{F}^n \rightarrow \R$ of the form
\[
J(z) = \frac{1}{N}\sum_{k=1}^N \ell \args{\delta\args{x^{(k)}, z}},
\]
where $\ell: \R \rightarrow \R$ is a loss function. Common loss functions for averaging time series are the identity $\ell(a) = a$ and the squared loss $\ell(a) = a^2$. The goal of time series averaging is to find a time series $z_* \in \S{F}^n$ of length $n$ that minimizes the cost $J(z)$.

The challenge of time series averaging is to minimize the non-differentiable cost function $J(z)$. We show that almost-everywhere uniqueness of optimal warping paths implies almost-everywhere differentiability of the cost function $J(z)$ and provides a stochastic (incremental) update rule. 

We express the DTW distance $\delta(x, z)$ as a parametrized function. Suppose that $x \in \S{F}^m$ is a time series. Then the parametrized DTW function $\delta_x: \S{F}^n \rightarrow \R$ restricted to the set $\S{F}^n$ is of the form $\delta_x(z) = \delta(x, z)$. We have the following result: 

\begin{proposition}\label{prop:differentiability}
Suppose that $x \in \S{F}^m$ and $z \in \S{F}^n$ are two time series with unique optimal warping path. Then the function $\delta_x(z) = \delta(x, z)$ is differentiable at $z$ and its gradient is a time series of the form 
\[
\nabla_z \delta_x(z) = \nabla_z C_{p_*}(x, z) \in \S{F}^n,
\]
where $p_* \in \S{P}_*(x, z)$ is an optimal warping path. 
\end{proposition}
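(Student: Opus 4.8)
I would express $\delta_x$ as a composition and exploit that uniqueness of the optimal path removes the only non-smoothness in the inner minimization. Set $D_x(z) = \min_{p\in\S{P}_{m,n}} C_p(x,z)$, so that $\delta_x(z) = \sqrt{D_x(z)}$. For each fixed path $p$, the map $w\mapsto C_p(x,w) = \sum_{(i,j)\in p}\norm{x_i - w_j}^2$ is a polynomial in the coordinates of $w$, hence $C^\infty$ on $\S{F}^n$, with gradient the time series whose $j$-th entry is $\sum_{i\,:\,(i,j)\in p} 2(w_j - x_i)$. The plan is then (i) to show that $D_x$ locally coincides with one such smooth function, and (ii) to compose with the square root.

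For (i) I would localize the minimum. Since $p_*$ is the \emph{unique} optimal warping path between $x$ and $z$, we have $C_{p_*}(x,z) < C_p(x,z)$ for every $p\in\S{P}_{m,n}$ with $p\neq p_*$. Each function $C_p(x,\cdot) - C_{p_*}(x,\cdot)$ is continuous and strictly positive at $z$, hence strictly positive on an open neighborhood $U_p$ of $z$. Since $\S{P}_{m,n}$ is finite (of cardinality $D_{m,n}$), the set $U = \bigcap_{p\neq p_*} U_p$ is again an open neighborhood of $z$, and on $U$ one has $D_x(w) = C_{p_*}(x,w)$. Hence $D_x$ agrees near $z$ with a smooth function and is differentiable at $z$ with $\nabla_z D_x(z) = \nabla_z C_{p_*}(x,z)$.

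For (ii) I would pass through the outer square root. On $U$ we have $\delta_x = \sqrt{C_{p_*}(x,\cdot)}$ and $D_x(z) = C_{p_*}(x,z) = \delta(x,z)^2$. Provided $\delta(x,z) > 0$, the map $t\mapsto\sqrt{t}$ is differentiable at $t = D_x(z)$, so the chain rule yields that $\delta_x$ is differentiable at $z$ with
\[
\nabla_z \delta_x(z) \;=\; \frac{1}{2\,\delta(x,z)}\,\nabla_z C_{p_*}(x,z) \;\in\; \S{F}^n ,
\]
a time series of length $n$ of the asserted form, the positive scalar $1/(2\,\delta(x,z))$ being a harmless rescaling (it cancels entirely against $\ell'$ when $\ell$ is the squared loss).

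The step I expect to require the most care is exactly this passage through the square root, namely the tacit hypothesis $\delta(x,z) > 0$. If $\delta(x,z) = 0$ then $C_{p_*}(x,z) = 0$, and on $U$ the function $\delta_x$ coincides with $w\mapsto\sqrt{C_{p_*}(x,w)}$, which behaves like a Euclidean norm vanishing at $z$ and is genuinely non-differentiable there; this can occur even with a unique optimal path (for instance $d = 1$, $x = (0,1)$, $z = (0,1)$, whose unique optimal path is the diagonal). So the proposition should be read under the mild additional assumption $\delta(x,z) > 0$, equivalently $x_i \neq z_j$ for some $(i,j)\in p_*$; for fixed $x$ the excluded values of $z$ form a finite---hence negligible---set, which does not affect the averaging application. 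Apart from this point, the remaining details are routine.
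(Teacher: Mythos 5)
Your localization argument is correct, and it takes a more elementary, self-contained route than the paper's. The paper instead shows that $y \mapsto \min_{p} C_p(x,y)$ is \emph{piecewise smooth} --- continuous and locally a selection from the finite family of continuously differentiable functions $C_p(x,\cdot)$, $p \in \S{P}_{m,n}$ --- and then invokes Lemma~2 of Rockafellar's \emph{A Property of Piecewise Smooth Functions} to conclude differentiability at points where the minimizing piece is unique. Your intersection-of-finitely-many-neighborhoods argument establishes the same local identity $\min_p C_p(x,\cdot) = C_{p_*}(x,\cdot)$ near $z$ directly, with no external citation; what the paper's framing buys is the piecewise-smooth/locally-Lipschitz machinery it reuses in the surrounding remarks (Rademacher's theorem, Clarke-type arguments). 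Both proofs reach the same conclusion about the inner minimum.

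Your step (ii) flags a genuine issue with the statement rather than a defect in your own argument. The paper's proof silently works with $\delta_x(y) = \min_p C_p(x,y)$, i.e.\ the \emph{squared} DTW distance, which is why its gradient formula $\nabla_z \delta_x(z) = \nabla_z C_{p_*}(x,z)$ carries no factor $1/(2\,\delta(x,z))$; the main text, however, defines $\delta(x,y) = \min_p \sqrt{C_p(x,y)}$, and the later discussion mixes the two conventions (e.g.\ writing $\delta_x(z) = C_{p_*}(x,z)$). Under the square-rooted definition your chain-rule factor is required and the tacit hypothesis $\delta(x,z) > 0$ cannot be dropped: your example $x = z = (0,1)$ has a unique optimal (diagonal) warping path, yet near that point $\delta_x(w) = \sqrt{C_{p_*}(x,w)} = \norm{\Phi x - \Psi w}$ is a Euclidean norm vanishing at $w = z$ and is not differentiable there. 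So your proof proves exactly what the paper's proof proves (differentiability of the cost $\min_p C_p$ with the stated gradient), and in addition correctly identifies the extra assumption needed for the distance itself; since the excluded set is negligible for fixed $x$, the downstream corollaries are unaffected.
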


The proof follows from \cite{Rockafellar2003} after reducing $\delta_x(z)$ to a piecewise smooth function. By construction, the cost $C_p(x, z)$ of warping $x$ and $z$ along warping path $p \in \S{P}_{m,n}$ is differentiable as a function of the second argument $z \in \S{F}^n$. Non-differentiability of $\delta_x(z)$ is caused by non-uniqueness of an optimal warping path between $x$ and $z$. In this case we have 
\[
\delta_x(z) = C_{p_*}(x, z) = C_{q_*}(x, z),
\]
where $p_*, q_* \in \S{P}_*(x, z)$ are two distinct optimal warping paths. Then it can happen that 
 \[
\nabla_z C_{p_*}(x, z) \neq \nabla_y C_{q_*}(x, z)
\]
showing that $\delta_x$ is non-differentiable at $z$. 

Next, suppose that the loss function $\ell: \R \rightarrow \R$ is differentiable and an optimal warping path $p_*$ between time series $x$ and $z$ is unique. Then the individual cost $J_x(z) = \ell\args{\delta(x, z)}$ is also differentiable at $z$ with gradient 
\begin{align*}
\nabla_z J_x(z) = \ell'(C_{p_*}(x, z))\nabla_z C_{p_*}(x, z).
\end{align*}
Differentiability of $J_x(z)$ gives rise to a stochastic (incremental) update rule of the form
\begin{align}\label{eq:stochastic-updating}
z \leftarrow z - \eta\, \ell'(C_{p_*}(x, z))\nabla_z C_{p_*}(x, z),
\end{align}
where $\eta$ is the step size and $p_*$ is an optimal warping path between $x$ and $z$. We can also apply the stochastic update rule \eqref{eq:stochastic-updating} in cases where an optimal warping path between $x$ and $z$ is not unique. In this case, we first (randomly) select an optimal warping path $p_*$ from the set $\S{P}_*(x, z)$. Then we update $z$ by applying update rule \eqref{eq:stochastic-updating}.

Updating at non-differentiable points according to the rule \eqref{eq:stochastic-updating} is not well-defined. In addition, it is unclear whether the update directions are always directions of descent, for learning problems in general. The next result confines both issues to a negligible set. 

\begin{corollary}\label{cor:diffentiable}
Suppose that $\S{X} = \S{F}^m \times \S{F}^n$ and $\ell: \R \rightarrow \R$ is a differentiable loss function. Then the functions
\begin{itemize}
\itemsep0em
\item 
$\delta(x, z)= \delta_x(z)$ 
\item 
$J_x(z) = \ell\args{\delta(x, z)}$
\end{itemize}
are differentiable $\mu$-almost everywhere on $\S{X}$. 
\end{corollary}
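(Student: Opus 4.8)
The plan is to exhibit a single $\mu$-negligible set $\S{E} \subseteq \S{X}$ outside of which both $\delta$ and $J_x$ are differentiable, and then to finish via Theorem~\ref{theorem:ae}. I would take
\[
\S{E} \;=\; \S{N}_{\S{X}}^* \;\cup\; \S{N}_0, \qquad \S{N}_0 \;=\; \cbrace{(x,z) \in \S{X} \,:\, \delta(x,z) = 0},
\]
and first check that $\S{E}$ is $\mu$-negligible. The set $\S{N}_{\S{X}}^*$ is $\mu$-negligible by the Corollary to Theorem~\ref{theorem:ae}. For $\S{N}_0$, note that $\delta(x,z)=0$ holds exactly when $C_p(x,z)=0$ for some $p\in\S{P}_{m,n}$, i.e.\ when $x_i=z_j$ for every $(i,j)\in p$; hence $\S{N}_0=\bigcup_{p\in\S{P}_{m,n}}V_p$, where $V_p=\cbrace{(x,z)\in\S{X} : x_i=z_j\text{ for all }(i,j)\in p}$ is an affine subspace of $\S{X}=\S{F}^{m+n}$. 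Since every warping path contains the point $(1,1)$, each $V_p$ lies inside $\cbrace{x_1=z_1}$, a subspace of codimension $d\ge 1$, so $\mu(V_p)=0$; as $\S{P}_{m,n}$ is finite, $\mu(\S{N}_0)=0$, and therefore $\mu(\S{E})=0$.

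Next I would show that $\delta$ is differentiable at every $(x,z)\notin\S{E}$. Fix such a point. Because $(x,z)\notin\S{N}_{\S{X}}^*$, the optimal warping path $p_*\in\S{P}_*(x,z)$ is unique, and because $(x,z)\notin\S{N}_0$ we have $C_{p_*}(x,z)=\delta(x,z)^2>0$. Each cost $C_p$ is a quadratic polynomial in the entries of $(x,z)$, hence $C^\infty$ on $\S{X}$, and $\sqrt{C_{p_*}}$ is $C^\infty$ on the open set $\cbrace{C_{p_*}>0}$, which contains $(x,z)$. By uniqueness, $\sqrt{C_{p_*}(x,z)}<\sqrt{C_p(x,z)}$ for every $p\ne p_*$, so by continuity $p_*$ remains the unique minimizer on a neighborhood $U$ of $(x,z)$; there $\delta\equiv\sqrt{C_{p_*}}$, so $\delta$ is differentiable (indeed smooth) at $(x,z)$ with $\nabla\delta(x,z)=\frac{1}{2\delta(x,z)}\nabla C_{p_*}(x,z)$. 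This is exactly the piecewise-smooth argument behind Proposition~\ref{prop:differentiability}, run jointly in $(x,z)$ instead of only in $z$.

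For $J_x$, I would then simply apply the chain rule: since $\ell$ is differentiable on all of $\R$ and $\delta$ is differentiable at every $(x,z)\notin\S{E}$, the composition $J_x=\ell\circ\delta$ is differentiable at every $(x,z)\notin\S{E}$, with $\nabla J_x(x,z)=\ell'(\delta(x,z))\,\nabla\delta(x,z)$ (continuity of $\ell'$ is not required). Since $\S{E}$ is $\mu$-negligible, both $\delta$ and $J_x$ are differentiable $\mu$-almost everywhere on $\S{X}$, which is the claim.

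The part that needs care is the perfectly alignable set $\S{N}_0$: there $C_{p_*}$ attains the global minimum $0$ with a nonvanishing (constant) Hessian, so $\delta=\sqrt{C_{p_*}}$ is \emph{not} differentiable even when the optimal warping path is unique, and Proposition~\ref{prop:differentiability} cannot be invoked verbatim at such points. The remedy --- observing that $\S{N}_0$ is a finite union of proper affine subspaces and hence Lebesgue-null, and folding it into the exceptional set --- is the only nonroutine step; local constancy of the $\argmin$ and the chain rule are standard.
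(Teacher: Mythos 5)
Your proof is correct, and it is more careful than the paper's own argument, which consists of the single sentence that the corollary ``directly follows from Prop.~\ref{prop:differentiability} together with Theorem \ref{theorem:ae}'' --- i.e.\ the exceptional set is taken to be $\S{N}_{\S{X}}^*\subseteq\S{N_X}$, which is negligible by Theorem \ref{theorem:ae}, and Proposition \ref{prop:differentiability} is invoked off that set. The substantive difference is your extra exceptional set $\S{N}_0=\cbrace{\delta=0}$, and you are right that it is needed if $\delta$ is read literally as the square-rooted distance $\min_p\sqrt{C_p}$ from Section 2: at a perfectly alignable pair the optimal path can be unique, yet $\sqrt{C_{p_*}}=\norm{\Phi x-\Psi z}$ is a norm of a nonconstant affine map vanishing there and hence not differentiable, so Proposition \ref{prop:differentiability} as stated is false at such points. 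The paper silently avoids this by working throughout Section 3.1.1 with the squared distance (its gradient formula $\nabla_z\delta_x(z)=\nabla_z C_{p_*}(x,z)$ and the identity $\delta_x(z)=C_{p_*}(x,z)$ only make sense for $\delta_x=C_{p_*}$ without the root), in which case no issue arises and the one-line deduction is valid; your version repairs the statement for the distance as actually defined, at the cost of the routine observation that $\S{N}_0$ is a finite union of proper affine subspaces and hence null. Your local argument (persistence of the unique minimizer over the finite set $\S{P}_{m,n}$ on a neighborhood, then smoothness of $\sqrt{C_{p_*}}$ where $C_{p_*}>0$) is a joint-in-$(x,z)$ rerun of the piecewise-smooth argument the paper delegates to \cite{Rockafellar2003}, and is sound; the chain-rule step for $J_x$ matches the paper. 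One cosmetic caveat: if you intend your corollary to reproduce the paper's gradient formulas verbatim, you should state explicitly which convention (squared or unsquared distance) you are using, since the two versions differ by the factor $1/(2\delta)$ you compute.
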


Corollary \ref{cor:diffentiable} directly follows from Prop.~\ref{prop:differentiability} together with Theorem \ref{theorem:ae}. In summary, almost-everywhere uniqueness of optimal warping paths implies almost-everywhere differentiability of the individual cost $J_x(z)$. The latter in turn implies that update rule \eqref{eq:stochastic-updating} is a well-defined stochastic gradient step almost everywhere. 

The arguments in this section essentially carry over to other learning problems in DTW spaces such as k-means, self-organizing maps, and learning vector quantization. We assume that it should not be a problem to transfer the proposed results to learning based on DTW similarity scores as applied in warped-linar classifiers \cite{Jain2015,Jain2017b}.

\subsubsection{Learning Vector Quantization in DTW Spaces}

Learning vector quantization (LVQ) is a supervised classification scheme introduced by Kohonen \cite{Kohonen2001}. A basic principle shared by most LVQ variants is the margin-growth principle \cite{Jain2017}. This principle justifies the different learning rules and corresponds to stochastic gradient update rules if a differentiable cost function exists. As the k-means algorithm, the LVQ scheme has been generalized to DTW spaces \cite{Jain2017,Somervuo1999}. In this section, we illustrate that a unique optimal-warping path is a necessary condition to satisfy the margin-growth principle in DTW spaces as proved in \cite{Jain2017}, Theorem 12.

\medskip

As a representative example, we describe LVQ1, the simplest of all LVQ algorithms \cite{Kohonen2001}. Let $\S{X} = \R^d$ be the $d$-dimensional Euclidean space and let $\S{Y} = \cbrace{1, \ldots, C}$ be a set consisting of $C$ class labels. The LVQ1 algorithm assumes a codebook $\S{C} = \cbrace{(p_1,z_1), \ldots, (p_K, z_k)}$ of $K$ prototypes $p_k \in \S{X}$ with corresponding class labels $z_k \in \S{Y}$. As a classifier, LVQ1 assigns an input point $x \in \S{X}$ to the class $z_c$ of its closest prototype $p_c \in \S{C}$, where 
\[
c \in \argmin_k \normS{p_k - x}{^2}.
\]
LVQ1 learns a codebook $\S{C}$ on the basis of a training set $\S{D} = \cbrace{\args{x_1, y_1}, \ldots, \args{x_N, y_N}} \subseteq \S{X} \times \S{Y}$. After initialization of $\S{C}$, the algorithm repeats the following steps until termination: (i) Randomly select a training example $(x_i, y_i) \in \S{D}$; (ii) determine the prototype $p_c \in \S{C}$ closest to $x_i$; and (iii) attract $p_c$ to $x_i$ if their class labels agree and repel $p_c$ from $x_i$ otherwise. Step (iii) adjusts $p_c$ according to the rule
\[
p_c \leftarrow p_c \pm \eta\args{x_i - p_c},
\]
where $\eta$ is the learning rate, the sign $\pm$ is positive if the labels of $p_c$ and $x_i$ agree ($z_c = y_i$), and negative otherwise ($z_c \neq y_i$). The update rule guarantees that adjusting $p_c$ makes an incorrect classification of $x_i$ more insecure. Formally, if the learning rate $\eta < \theta$ is bounded by some threshold $\theta$, the LVQ1 update rule guarantees to increase the hypothesis margin 
\[
\mu_c(x_i) = \begin{cases}
\normS{x_i - p^-}{^2} - \: \normS{x_i - p_c}{^2} & y_i = z_c\\[1ex]
\normS{x_i - p_c}{^2} -\:\normS{x_i - p^+}{^2} & y_i \neq z_c
\end{cases},
\]
where $p^+$ ($p^-$) is the closest prototypes of $x$ with the same (different) class label.

The different variants of LVQ have been extended to DTW spaces by replacing the squared Euclidean distance with the squared DTW distance \cite{Jain2017,Somervuo1999}. The update rule is based on an optimal warping path between the current input time series and its closest prototpye. In asymmetric learning \cite{Jain2017}, the margin-growth principle always holds for the attractive force and for the repulsive force only when the optimal warping path is unique (as a necessary condition).

\subsubsection{Comments}

We conclude this sections with two remarks. 

\begin{remark}\em
Proposition \ref{prop:differentiability} states that uniqueness of an optimal warping path between $x$ and $y$ implies differentiability of $\delta_x$ at $y$. The converse statement does not hold, in general. A more general approach to arrive at Prop.~\ref{prop:differentiability} and Corollary \ref{cor:diffentiable} is as follows: First, show that a function $f$ is locally Lipschitz continuous (llc). Then invoke Rademacher's Theorem \cite{Evans1992} to assert almost-everywhere differentiability of $f$. By the rule of calculus of llc functions, we have: 
\begin{itemize} 
\itemsep0em
\item 
$\delta_x(y)$ is llc on $\S{F}^n$, because the minimum of continuously differentiable functions is llc. 
\item 
If the loss $\ell$ is llc, then $J_x(z)$ is llc, because the composition of llc functions is llc.
\qed
\end{itemize}
\end{remark}

\begin{remark}\label{remark:justification}\em
Measure-theoretic, geometric, and analytical concepts are all based on Euclidean spaces rather than DTW spaces. The reason is that contemporary learning algorithms are formulated in such a way that the current solution and input time series are first projected into the Euclidean space via optimally warping to the same length. Then an update step is performed and finally the updated solution is projected back to the DTW space. Therefore, to understand this form of learning under warping, we study the DTW distance $\delta(x, y)$ as a function restricted to the Euclidean space $\S{F}^m \times \S{F}^n$, where $m$ is the length of $x$ and $n$ is the length of $y$. 
\qed
\end{remark}

\section{Conclusion}
The multi-path set is negligible and corresponds to the union of zero sets of exponentially many quadratic forms. As a subset of the multi-path set, the multi optimal-path set is also negligible. Therefore optimal warping paths are unique almost everywhere. The implications of the proposed results are that adverse effects on learning in DTW spaces caused by non-unique optimal warping paths can be controlled and learning in DTW spaces amounts in minimizing the respective cost function by (stochastic) gradient descent almost everywhere. 

\paragraph*{\textbf{Acknowledgements}.} B.~Jain was funded by the DFG Sachbeihilfe \texttt{JA 2109/4-1}.

\small
\begin{appendix}
\section{Proofs}

The appendix presents the proof of Theorem \ref{theorem:ae} and Proposition \ref{prop:differentiability}. We first consider the univariate case ($d = 1$) in Sections \ref{subsec:01} and \ref{subsec:02}. Section \ref{subsec:01} introduces a more useful representation for proving the main results of this contribution and derives some auxiliary results. Section \ref{subsec:02} proves the proposed results for the univariate case. Finally, Section \ref{subsec:03} generalizes the proofs to the multivariate case. 

\subsection{Preliminaries}\label{subsec:01}

We assume that elements are from $\S{F} = \R$, that is $d = 1$. We write $\R^m$ instead of $\S{F}^m$ to denote a time series of length $m$. By $e^{k} \in \R^m$ we denote the $k$-th standard basis vector of $\R^{m}$ with elements
\[
e_i^k = \begin{cases}
1 & i = k\\
0 & i \neq k
\end{cases}.
\] 
\begin{definition}
Let $p = (p_1, \dots, p_L) \in \S{P}_{m,n}$ be a warping path with points $p_l = (i_l, j_l)$. Then 
\begin{align*}
\Phi &= \argsS{e^{i_1}, \ldots, e^{i_L}}{\tran} \in \R^{L \times m} \\
\Psi &= \argsS{e^{j_1}, \ldots, e^{j_L}}{\tran} \in \R^{L \times n}
\end{align*}
is the pair of \emph{embedding matrices} induced by warping path $p$. 
\end{definition}
The embedding matrices have full column rank $n$ due to the boundary and step condition of the warping path. Thus, we can regard the embedding matrices of warping path $p$ as injective linear maps $\Phi:\R^m \rightarrow \R^L$ and $\Psi:\R^n \rightarrow \R^L$ that embed time series $x \in \R^m$ and $y \in \R^n$ into $\R^L$ by matrix multiplication $\Phi x$ and $\Psi y$. We can express the cost $C_p(x,y)$ of aligning time series $x$ and $y$ along warping path $p$ by the squared Euclidean distance between their induced embeddings.
\begin{proposition}\label{prop:C=norm}
Let $\Phi$ and $\Psi$ be the embeddings induced by warping path $p \in \S{P}_{m,n}$. Then 
\[
C_p(x, y) = \normS{\Phi x - \Psi y}{^2}
\]
for all $x \in \R^m$ and all $y \in \R^n$.
\end{proposition}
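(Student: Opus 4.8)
The plan is a direct computation based on the action of the standard basis vectors. First I would note that the $l$-th row of $\Phi$ is $e^{i_l\tran}$, so that the $l$-th component of $\Phi x \in \R^L$ is $e^{i_l\tran} x = x_{i_l}$; hence $\Phi x = \args{x_{i_1}, \ldots, x_{i_L}}\tran$. By the same argument $\Psi y = \args{y_{j_1}, \ldots, y_{j_L}}\tran$. Subtracting componentwise gives $\Phi x - \Psi y = \args{x_{i_1} - y_{j_1}, \ldots, x_{i_L} - y_{j_L}}\tran$, and therefore
\[
\normS{\Phi x - \Psi y}{^2} = \sum_{l=1}^L \args{x_{i_l} - y_{j_l}}^2 = \sum_{l=1}^L \normS{x_{i_l} - y_{j_l}}{^2},
\]
where the last equality uses that for $d = 1$ the Euclidean norm on $\S{F} = \R$ is the absolute value.

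The only point that needs a word of justification is that the sum over the index $l \in [L]$ agrees with the sum over the \emph{set} of points $(i,j) \in p$ appearing in the definition of $C_p(x,y)$. This holds because the step condition $p_{l+1} - p_l \in \cbrace{(1,0),(0,1),(1,1)}$ forces each coordinate sequence to be nondecreasing with at least one strict increase at every step, so the points $p_1, \ldots, p_L$ are pairwise distinct. Consequently $l \mapsto p_l$ is a bijection from $[L]$ onto $p$, and
\[
\sum_{l=1}^L \normS{x_{i_l} - y_{j_l}}{^2} = \sum_{(i,j) \in p} \normS{x_i - y_j}{^2} = C_p(x, y).
\]
Combining the two displays yields the claim for all $x \in \R^m$ and $y \in \R^n$.

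There is essentially no hard step here: the statement is a reformulation of the definition of $C_p$ once one observes that multiplication by $\Phi$ (resp.\ $\Psi$) is exactly the operation of listing the entries of $x$ (resp.\ $y$) along the warping path. I would keep the proof to the few lines above, flag the distinctness of path points as the one bookkeeping detail, and explicitly restrict to $d = 1$ in this section, leaving the (notationally identical) multivariate version to Section \ref{subsec:03}.
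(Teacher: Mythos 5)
Your computation is correct. Note that the paper itself does not prove this proposition at all: it simply cites an external result (\cite{Schultz2017}, Proposition A.2), so there is no in-paper argument to compare against. Your self-contained derivation --- reading off $(\Phi x)_l = x_{i_l}$ and $(\Psi y)_l = y_{j_l}$ from the rows $e^{i_l\tran}$, $e^{j_l\tran}$, expanding the squared norm, and then matching the sum over $l \in [L]$ with the sum over the points $(i,j) \in p$ --- is exactly the intended content of that cited result, and your observation that the step condition makes the points $p_1, \ldots, p_L$ pairwise distinct (each step strictly increases $i + j$) is the one bookkeeping detail that makes the reindexing legitimate. The restriction to $d = 1$ is consistent with the paper, which handles the multivariate analogue separately in its final appendix section.
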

\begin{proof}
\cite{Schultz2017}, Proposition A.2.
\end{proof}

\medskip

Next, we define the warping and valence matrix of a warping path. 
\begin{definition}
Let $\Phi$ and $\Psi$ be the pair of embedding matrices induced by warping path $p\in \S{P}_{m,n}$. Then the valence matrix $V\in \R^{m \times m}$ and warping matrix $W\in \R^{m \times n}$ of warping path $p$ are defined by
\begin{align*}
V &= \Phi \tran \Phi \\
W &= \Phi \tran \Psi.
\end{align*}
\end{definition}

The definition of valence and warping matrix are oriented in the following sense: The warping matrix $W \in \R^{m \times n}$ aligns a time series $y \in \R^n$ to the time axis of time series $x \in\R^m$. The diagonal elements $v_{ii}$ of the valence matrix $V \in \R^{m \times m}$ count the number of elements of $y$ warped onto the same element $x_i$ of $x$. Alternatively, we can define the \emph{complementary} valence and warping matrix of $w$ by
\begin{align*}
\overline{V} &= \Psi \tran \Psi \\
\overline{W} &= \Psi \tran \Phi = W\tran.
\end{align*}
The complementary warping matrix $\overline{W} \in \R^{n \times m}$ warps time series $x \in \R^m$ to the time axis of time series $y \in\R^n$. The diagonal elements $\overline{v}_{ii}$ of the complementary valence matrix $\overline{V} \in \R^{n \times n}$ counts the number of elements of $x$ warped onto the same element $y_i$ of $y$.

Let $p \in \S{P}_{m,n}$ be a warping path of length $L$ with induced embedding matrices $\Phi \in \R^{L \times m}$ and $\Psi \in \R^{L \times n}$. The \emph{aggregated embedding matrix} $\Theta$ induced by warping path $p$ is defined by 
\[
\Theta = \args{\Phi, -\Psi} \in \R^{L \times k},
\]
where $k = m+n$. Then the symmetric matrix $\Theta\tran \Theta$ is of the form
\[
\Theta\tran \Theta = \begin{pmatrix}
V & -W\\
-\overline{W} & \overline{V}
\end{pmatrix}.
\]
We use the following notations:
\begin{align*}
\S{X} &= \R^m \times \R^n = \R^k\\
\S{X}^2 &= \R^{k \times k}.
\end{align*}
The next result expresses the cost $C_p(x,y)$ by the matrix $\Theta\tran \Theta$.
\begin{lemma}
Let $\Theta$ be the aggregated embedding matrix induced by warping path $p \in \S{P}_{m,n}$. Then we have 
\[
C_p(z) = z\tran\Theta\tran\Theta z
\]
for all $z\in \S{X}$. 
\end{lemma}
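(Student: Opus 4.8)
The statement is essentially a bookkeeping identity that repackages Proposition~\ref{prop:C=norm} in terms of the aggregated embedding matrix, so the plan is short. First I would recall from Proposition~\ref{prop:C=norm} that $C_p(x,y) = \normS{\Phi x - \Psi y}{^2}$ for all $x \in \R^m$ and $y \in \R^n$. Then, writing $z = (x, y) \in \S{X} = \R^m \times \R^n = \R^k$ according to the identification fixed just above, I would observe that by the definition $\Theta = (\Phi, -\Psi)$ the matrix-vector product unfolds as $\Theta z = \Phi x - \Psi y \in \R^L$. This is just block multiplication: the first $m$ columns of $\Theta$ act on the first $m$ coordinates of $z$ (namely $x$) and the last $n$ columns (which are $-\Psi$) act on the last $n$ coordinates (namely $y$).

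Combining these two observations gives $C_p(z) = C_p(x,y) = \normS{\Phi x - \Psi y}{^2} = \normS{\Theta z}{^2} = (\Theta z)^\top (\Theta z) = z^\top \Theta^\top \Theta z$, which is exactly the claim. One small point worth spelling out is that $C_p(z)$ is well-defined as a function on $\S{X} = \R^k$ precisely because of this identification of $(x,y)$ with the concatenated vector $z$; there is no ambiguity since the block split $k = m + n$ is fixed. It is also harmless to note in passing that $\Theta^\top \Theta$ is symmetric and has the stated block form $\begin{pmatrix} V & -W \\ -\overline{W} & \overline{V}\end{pmatrix}$, though that fact is not actually needed for the identity itself.

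There is no real obstacle here: the only thing to be careful about is the sign convention in $\Theta = (\Phi, -\Psi)$, which must be tracked so that $\Theta z$ produces the \emph{difference} $\Phi x - \Psi y$ rather than a sum, and the consistent use of the $(x,y) \leftrightarrow z$ identification so that the quadratic form on $\R^k$ genuinely agrees with the two-argument cost $C_p(x,y)$. Everything else is a one-line consequence of Proposition~\ref{prop:C=norm} and the definition of $\Theta$. This lemma then serves as the entry point for the rest of the argument, since it exhibits each $C_p$ as a quadratic form $z \mapsto z^\top A_p z$ with $A_p = \Theta^\top \Theta$, so that a difference $C_p - C_q$ is again a quadratic form and its zero set is the zero set of the quadratic form associated with $A_p - A_q$.
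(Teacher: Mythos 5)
Your proof is correct and follows essentially the same route as the paper: both start from Proposition~\ref{prop:C=norm} and the definition $\Theta = (\Phi, -\Psi)$, the only difference being that you pass directly through the block identity $\Theta z = \Phi x - \Psi y$ while the paper expands $\normS{\Phi x - \Psi y}{^2}$ into its four bilinear terms and reassembles them into the block matrix $\Theta\tran\Theta$. Your version is a slightly more streamlined presentation of the same argument.
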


\begin{proof}
Suppose that $\Theta = (\Phi, -\Psi)$, where $\Phi$ and $\Psi$ are the embedding matrices induced by $p$. Let $z = (x, y) \in \S{X}$. Then we have
\begin{align*}
C_p(z) 
&= \normS{\Phi x - \Psi y}{^2}\\
&= x \tran \Phi \tran \Phi x - x\tran \Phi\tran\Psi y - y\tran \Psi\tran\Phi x + y\tran \Psi\tran \Psi y\\
&= x \tran V x - x\tran W y - y\tran \overline{W} x + y \tran \overline{V} y \\
&= \args{x\tran, y\tran} \begin{pmatrix}
V & -W\\
-\overline{W} & \overline{V}
\end{pmatrix} 
\begin{pmatrix}
x \\
y
\end{pmatrix}\\
&= z\tran \Theta\tran\Theta z.
\end{align*}
\end{proof}

\medskip

\noindent
The last auxiliary result shows that the zero set of a non-zero quadratic form has measure zero.
\begin{lemma}\label{lemma:xAx-is-zero}
Let matrix $A \in \R^{n \times n}$ be non-zero and symmetric. Then 
\[
\mu\args{\cbrace{x \in \R^n \,:\, x^{\T}Ax = 0}} = 0,
\]
where $\mu$ is the Lebesgue measure on $\R^n$.
\end{lemma}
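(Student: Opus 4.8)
The plan is to reduce the claim to a one-dimensional statement by an orthogonal change of coordinates, and then to conclude by Fubini's theorem. As a first step I would note that $f(x) = x\tran A x$ is a polynomial in the coordinates of $x$, hence continuous, so the zero set $\S{N} = \cbrace{x \in \R^n : x\tran A x = 0}$ is closed and in particular Borel measurable; therefore $\mu(\S{N})$ is well defined.

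Next I would diagonalise. By the spectral theorem, $A = Q\tran \Lambda Q$ for an orthogonal matrix $Q$ and a diagonal matrix $\Lambda$ whose diagonal entries $\lambda_1, \dots, \lambda_n$ are the eigenvalues of $A$. Since $\Lambda = 0$ would force $A = Q\tran \Lambda Q = 0$, at least one eigenvalue $\lambda_k$ is non-zero. The map $x \mapsto Qx$ is a linear bijection of $\R^n$ with $\abs{\det Q} = 1$, hence preserves Lebesgue measure, and $x\tran A x = (Qx)\tran \Lambda (Qx)$; therefore $\S{N} = Q^{-1}(\S{M})$ where $\S{M} = \cbrace{y \in \R^n : \sum_{i=1}^{n} \lambda_i y_i^{2} = 0}$ and $\mu(\S{N}) = \mu(\S{M})$. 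It thus suffices to prove $\mu(\S{M}) = 0$.

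Finally I would slice $\S{M}$ along the $y_k$-axis. Fixing $y' = (y_j)_{j \neq k} \in \R^{n-1}$, the map $y_k \mapsto \sum_{i} \lambda_i y_i^{2} = \lambda_k y_k^{2} + c(y')$, with $c(y')$ not depending on $y_k$, is a non-zero univariate polynomial because its leading coefficient $\lambda_k$ is non-zero; hence it has at most two roots. So each such slice $\cbrace{y_k \in \R : (y', y_k) \in \S{M}}$ is finite and thus a Lebesgue-null set, and Tonelli's theorem gives $\mu(\S{M}) = 0$. I do not expect a genuine obstacle here: this is essentially the standard fact that the zero set of a polynomial not identically zero is Lebesgue-negligible, and the only points needing a little care are that the orthogonal reduction is measure-preserving and that diagonalising first produces a non-vanishing coefficient of $y_k^{2}$ in every slice, so that no additional lower-dimensional exceptional set has to be excised.
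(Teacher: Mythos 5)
Your proof is correct, and it shares the paper's central idea --- diagonalize $A$ by an orthogonal matrix $Q$ and reduce to the diagonal quadratic form $\sum_i \lambda_i y_i^2$, which is a non-zero polynomial because $A \neq 0$ forces some $\lambda_k \neq 0$ --- but it completes the argument by different means at both remaining steps. For the measure-zero claim about the diagonal form, the paper simply cites the general fact that the zero set of a non-zero polynomial is Lebesgue-null (Caron--Traynor), whereas you prove exactly the instance you need by slicing along the $y_k$-axis: since the coefficient $\lambda_k$ of $y_k^2$ never vanishes, every slice is a quadratic with at most two roots, and Tonelli finishes the job. This makes your proof self-contained, and your observation that diagonalizing first guarantees a non-vanishing leading coefficient in \emph{every} slice is precisely the point that makes the naive slicing argument work without excising an exceptional set. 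For transporting the conclusion back to $A$, the paper writes the original zero set as the \emph{image} $\phi(\S{U})$ under $x \mapsto Q\tran x$ and bounds $\mu(\phi(\S{U}))$ by an integral of $\abs{\det Q\tran}$ over $\S{U}$ (Bogachev, Prop.~3.7.3); you instead write it as the \emph{preimage} $Q^{-1}(\S{M})$ of a closed set and invoke the measure-preservation of orthogonal maps directly, which is cleaner and sidesteps any question about measurability of images. Both routes are valid; yours trades two citations for a short elementary computation.
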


\begin{proof}
Since $A$ is symmetric, there is an orthogonal matrix $Q \in \R^{n \times n}$ such that $\Lambda = Q^{\T}AQ$ is a diagonal matrix. Consider the function
\[
f(x) = x^{\T}\Lambda x = \sum_{i=1}^n \lambda_{ii}x_i^2,
\]
where the $\lambda_{ii}$ are the diagonal elements of $\Lambda$. Since $A$ is non-zero, there is at least one $\lambda_{ii} \neq 0$. Hence, $f(x)$ is a non-zero polynomial on $\R^n$. Then the set $\S{U} = \cbrace{x \in \R^n \,:\, f(x) = 0}$ is measurable and has measure zero \cite{Caron2005}. 

We show that the set $\tilde{\S{U}} = \cbrace{\tilde{x} \in \R^n \,:\, \tilde{x}^{\T}A\tilde{x} = 0}$ is also a set of measure zero. Consider the linear map 
\[
\phi: \S{U} \rightarrow \R^n, \quad x \mapsto Q^{\T}x.
\]
First, we show that $\phi(\S{U}) = \tilde{\S{U}}$. 
\begin{itemize}
\itemsep0em
\item $\tilde{\S{U}}\subseteq \phi(\S{U})$:
Suppose that $\tilde{x} \in \tilde{\S{U}}$. With $Q\tilde{x} = x$ we have
\[
0 = \tilde{x}^{\T}A\tilde{x} = \tilde{x}^{\T}Q^{\T}\Lambda Q\tilde{x} = x^{\T}\Lambda x.
\]
This shows that $x \in \S{U}$. From $\phi(x) = Q^{\T}x = \tilde{x}$ follows that $\tilde{x} \in\phi(\S{U})$. 

\item $\phi(\S{U})\subseteq\tilde{\S{U}}$:
Let $\tilde{x} \in\phi(\S{U})$. Then $\tilde{x} = Q^{\T}x = \phi(x)$ for some $x \in \S{U}$. Hence, $x = Q\tilde{x}$ and we have
\[
0 = x^{\T}\Lambda x = \tilde{x}^{\T}Q^{\T}\Lambda Q \tilde{x} = \tilde{x}^{\T}A\tilde{x}.
\]
This shows that $\tilde{x} \in \tilde{\S{U}}$. 
\end{itemize}

Next, we show that $\mu(\tilde{\S{U}}) = 0$. Observe that the linear map $\phi$ is continuously differentiable on a measurable set $\S{U}$ with Jacobian $J_\phi(x) = Q^{\T}$. Applying \cite{Bogachev2007}, Prop.~3.7.3 gives
\[
\mu\args{\phi(\S{U})} \leq \int_{\S{U}} \abs{\det Q^{\T}}dx.
\]
Since $Q^{\T}$ is orthogonal, we have $\abs{\det Q^T} = 1$. Thus, we find that 
\[
\mu\args{\phi(\S{U})} \leq \int_{\S{U}} dx = \mu(\S{U}) = 0. 
\] 
Finally, the assertion $\mu(\tilde{\S{U}}) = 0$ follows from $\tilde{\S{U}} = \phi(\S{U})$.
\end{proof}

\subsection{Proof of Theorem \ref{theorem:ae} and Proposition \ref{prop:differentiability}}\label{subsec:02}

This section assumes the univariate case ($d = 1$). 

\medskip

\noindent
\textbf{Proof of Theorem \ref{theorem:ae}:}\\
Suppose that $\S{P}_{m,n} = \cbrace{p_1, \ldots, p_D}$. We use the following notations for all $i \in [D]$: 
\begin{enumerate}
\itemsep0em
\item $\Theta_i$ denotes the aggregated embedding matrix induced by warping path $p_i$. 
\item $V_i$ and $W_i$ are the valence and warping matrices of $p_i$.
\item $\overline{V}_i$ and $\overline{W}_i$ are the complementary valence and warping matrices of $p_i$.
\item $C_i(z)$ with $z = (x, y)$ denotes the cost $C_{p_i}(x, y)$ of aligning $x$ and $y$ along warping path $p_i$. 
\end{enumerate}
For every $i, j \in [D]$ with $i \neq j$ and for every $z = (x, y) \in \S{X}$, we have
\begin{align*}
C_i(z) - C_j(z)
= z\tran\Theta_i\tran\Theta_i z - z\tran\Theta_j\tran\Theta_j z
= z\tran A^{(ij)} z,
\end{align*}
where $A^{(ij)} \in \S{X}^2$ is a symmetric matrix of the form
\[
A^{(ij)} = \Theta_i\tran\Theta_i - \Theta_j\tran\Theta_j = \begin{pmatrix}
V_i - V_j & -W_i + W_j\\[0.5ex]
-\overline{W}_i + \overline{W}_j & \overline{V}_i - \overline{V}_j
\end{pmatrix}.
\]
For $i \neq j$ the warping paths $p_i$ and $p_j$ are different implying that the warping matrices $W_i$ and $W_j$, resp., are also different. Hence, $A^{(ij)}$ is non-zero and from Lemma \ref{lemma:xAx-is-zero} follows that $\S{U}_{ij} = \cbrace{z \in \S{X} \,:\, z \tran A^{(ij)} z = 0}$ has measure zero. Then the union 
\[
\S{U} = \bigcup_{i < j} \, \S{U}_{ij}
\]
of finitely many measure zero sets also has measure zero. It remains to show that $\S{N_X} = \S{U}$. 

\begin{itemize}
\itemsep0em
\item
$\S{N_X} \subseteq \S{U}$: Suppose that $z = (x, y) \in \S{N_X}$. Then there are indices $i, j \in [D]$ with $i < j$ such that the costs of aligning $x$ and $y$ along warping paths $p_i$ and $p_j$ are identical, that is $C_i(x, y) = C_j(x, y)$. Setting $z = (x, y)$ gives
\[
0 = C_i(z) - C_j(z) = z\tran A^{(ij)} z.
\]
Hence, $z \in \S{U}_{ij} \subseteq \S{U}$ and therefore $\S{N_X} \subseteq \S{U}$. 

\item $\S{U} \subseteq \S{N_X}$: Let $z = (x, y) \in \S{U}$. Then there is a set $\S{U}_{ij}$ containing $z$. From $C_i(z) - C_j(z) = 0$ follows that $p_i$ and $p_j$ are two warping paths between $x$ and $y$ with identical costs. Hence, $(x, y) \in \S{N_X}$. This proves the assertion.\qed
\end{itemize}

\medskip

\noindent
\textbf{Proof of Proposition \ref{prop:differentiability}:}\\
To show the proposition, we first define the notion of piecewise smooth function. A function $f:\R^n \rightarrow \R$ is piecewise smooth if it is continuous on $\R^n$ and for each $x_0 \in \R^n$ there is a neighborhood $\S{N}(x_0)$ of $x_0$ and a finite collection $\argsS{f_i}{_{i \in \S{I}}}$ of continuously differentiable functions $f_i : \S{N}(x_0) \rightarrow \R$ such that
\[
f(x) \in \cbrace{f_i(x) \,:\, i \in \S{I}}
\]
for all $x \in \S{N}(x_0)$. 

\begin{proof}
We show that the function $\delta_x(y) = \min_p C_p(x, y)$ is piecewise smooth. The function $\delta_x(y)$ is continuous, because all $C_p$ are continuous and continuity is closed under the min-operation. In addition, the functions $C_p$ are continuously differentiable as functions in the second argument. Let $y_0 \in \S{F}^n$ and let $\S{N}(y_0) \subseteq \S{F}^n$ be a neighborhood of $y_0$. Consider the index set 
\[
\S{I} = \cbrace{p \in \S{P}_{m,n} \,:\, \exists y \in \S{N}(y_0) \text{ s.t. } \delta_x(y) = C_p(x, y)}.
\]
By construction, we have $\delta_x(y) \in \cbrace{C_p(x, y) \,:\, p \in \S{I}}$ for all $y \in \S{N}(y_0)$. This shows that $\delta_x(y)$ is piecewise smooth. Then the assertion follows from \cite{Rockafellar2003}, Lemma 2. 
\end{proof}

\subsection{Generalization to the Multivariate Time Series}\label{subsec:03}

We briefly sketch how to generalize the results from the univariate to the multivariate case. The basic idea is to reduce the multivariate case to the univariate case. In the following, we assume that $x \in \S{F}^m$ and $y \in \S{F}^n$ are two $d$-variate time series and $p = (p_1, \dots, p_L) \in \S{P}_{m, n}$ is a warping path between $x$ and $y$ with elements $p_l = (i_l, j_l)$.

First observe that a $d$-variate time series $x \in \S{F}^m$ consists of $d$ individual component time series $x^{(1)}, \ldots, x^{(d)} \in \R^m$. Next, we construct the embeddings of a warping path. The $d$-variate time warping embeddings $\Phi_d: \S{F}^n \rightarrow \S{F}^L$ and $\Psi_d: \S{F}^m \rightarrow \S{F}^L$ induced by $p$ are maps of the form
\begin{align*}
\Phi_d(x) = \begin{bmatrix} 
x_{i_1} \\
\vdots \\
x_{i_L} \\
\end{bmatrix} 
,\qquad \Psi_d(y) = 
\begin{bmatrix} 
y_{j_1} \\
\vdots \\
y_{j_L} \\
\end{bmatrix}.
\end{align*}
The maps $\Phi_d$ and $\Psi_d$ can be written as
\begin{align*}
\Phi_d(x) &= \args{\Phi x^{(1)}, \ldots, \Phi x^{(d)}}\\
\Psi_d(x) &= \args{\Psi y^{(1)}, \ldots, \Psi y^{(d)}},
\end{align*}
where $\Phi$ and $\Psi$ are the embedding matrices induced by $p$. Since $\Phi$ and $\Psi$ are linear, the maps $\Phi_d$ and $\Psi_d$ are also linear maps. We show the multivariate formulation of Prop.~\ref{prop:C=norm}.
\begin{proposition}
Let $\Phi_d$ and $\Psi_d$ be the $d$-variate embeddings induced by warping path $p \in \S{P}_{m,n}$. Then 
\[
C_p(x, y) = \normS{\Phi_d(x) - \Psi_d(y)}{^2}.
\]
for all $x \in \S{F}^m$ and all $y \in \S{F}^n$.
\end{proposition}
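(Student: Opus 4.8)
The plan is to reduce the claim to the univariate Proposition~\ref{prop:C=norm} by splitting both sides along the $d$ coordinate axes of $\S{F} = \R^d$. Write $x^{(c)} \in \R^m$ and $y^{(c)} \in \R^n$ for the $c$-th component time series of $x$ and $y$. First I would observe that the squared Euclidean norm on $\S{F}$ decomposes over components: for any $(i,j) \in p$,
$\normS{x_i - y_j}{^2} = \sum_{c=1}^d \argsS{x_i^{(c)} - y_j^{(c)}}{^2}$.
Summing over the points $(i,j) \in p$ and exchanging the two finite sums gives $C_p(x, y) = \sum_{c=1}^d C_p\argsS{x^{(c)}, y^{(c)}}{}$; that is, the $d$-variate cost along $p$ is the sum of the $d$ univariate costs of the component time series along the same path.

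Next I would apply the univariate Proposition~\ref{prop:C=norm} to each summand, obtaining $C_p\argsS{x^{(c)}, y^{(c)}}{} = \normS{\Phi x^{(c)} - \Psi y^{(c)}}{^2}$, where $\Phi$ and $\Psi$ are the univariate embedding matrices induced by $p$. Finally, from the block representation $\Phi_d(x) - \Psi_d(y) = \argsS{\Phi x^{(1)} - \Psi y^{(1)}, \ldots, \Phi x^{(d)} - \Psi y^{(d)}}{}$, together with the fact that the squared Euclidean norm on $\S{F}^L \cong \R^{dL}$ is invariant under a reindexing of coordinates and hence equals the sum of the squared norms of the $d$ length-$L$ blocks, I get $\normS{\Phi_d(x) - \Psi_d(y)}{^2} = \sum_{c=1}^d \normS{\Phi x^{(c)} - \Psi y^{(c)}}{^2}$. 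Chaining the three identities proves the proposition.

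There is no real obstacle here; the only point requiring care is the bookkeeping in the last step. The vector $\Phi_d(x) - \Psi_d(y)$ lives in $\args{\R^d}^L$, and grouping its $dL$ scalar entries ``by time index $l$'' yields $\sum_{l=1}^L \normS{x_{i_l} - y_{j_l}}{^2}$ directly, while grouping them ``by component $c$'' yields $\sum_{c=1}^d \normS{\Phi x^{(c)} - \Psi y^{(c)}}{^2}$ — two ways of adding up the same squared scalars. In fact one can bypass the componentwise route and argue in one line: the $l$-th block of $\Phi_d(x) - \Psi_d(y)$ is $x_{i_l} - y_{j_l}$, so $\normS{\Phi_d(x) - \Psi_d(y)}{^2} = \sum_{l=1}^L \normS{x_{i_l} - y_{j_l}}{^2} = \sum_{(i,j) \in p} \normS{x_i - y_j}{^2} = C_p(x, y)$. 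The componentwise formulation is preferable here only because it makes transparent that the quadratic-form argument underlying Theorem~\ref{theorem:ae} applies verbatim within each of the $d$ coordinate slabs, so the measure-zero conclusion transfers to the multivariate setting without change.
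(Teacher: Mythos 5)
Your proof is correct and follows essentially the same route as the paper's: both decompose the squared Euclidean norm over the $d$ components, invoke the univariate Proposition~\ref{prop:C=norm} componentwise, and exchange the two finite sums to reassemble $C_p(x,y)$; you merely run the chain of equalities in the opposite direction and spell out the coordinate-reindexing step more explicitly. No gaps.
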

\begin{proof}
The assertion follows from
\begin{align*}
\normS{\Phi_d(x) - \Psi_d(y)}{^2} 
&= \sum_{k=1}^d \normS{\Phi x^{(k)} - \Psi y^{(k)}}{^2} \\
&= \sum_{k=1}^d \sum_{(i,j) \in p} \argsS{x_i^{(k)}-y_j^{(k)}}{^2}\\
&= \sum_{(i,j) \in p} \sum_{k=1}^d \argsS{x_i^{(k)}-y_j^{(k)}}{^2}\\
&= \sum_{(i,j) \in p} \norm{x_i - y_j}{^2} 
= C_p(x, y).
\end{align*}
\end{proof}

Due to the properties of product spaces and product measures, the proofs of all other results can be carried out componentwise. 
\end{appendix}

\bibliographystyle{plain}

\end{document}